\DeclareMathOperator*{\argmax}{arg\,max}
\newcommand{\vsim}{\mathrel{\scalebox{1}[1.5]{$\shortmid$}\mkern-3.1mu\raisebox{0.1ex}{$\sim$}}}
\newcommand{\vapprox}{\mathrel{\scalebox{1}[1.5]{$\shortmid$}\mkern-3.1mu\raisebox{0.1ex}{$\approx$}}}
\begin{document}
\title{Bayes Meets Entailment and Prediction: Commonsense Reasoning with Non-monotonicity, Paraconsistency and Predictive Accuracy\thanks{This paper is a substantial extension of the arXiv eprint \cite{kido2020bayesian}. This paper was submitted to AAAI 2021 and rejected.}}
%
\titlerunning{Bayes Meets Entailment and Prediction}
\author{Hiroyuki Kido\inst{1} \and
Keishi Okamoto\inst{2}}
\institute{Cardiff University, UK\\
\email{KidoH@cardiff.ac.uk} \and
National Institute of Technology, Sendai College, Japan\\
\email{okamoto@sendai-nct.ac.jp}}
\begin{sloppypar}
\maketitle
\begin{abstract}
The recent success of Bayesian methods in neuroscience and artificial intelligence gives rise to the hypothesis that the brain is a Bayesian machine. Since logic and learning are both practices of the human brain, it leads to another hypothesis that there is a Bayesian interpretation underlying both logical reasoning and machine learning. In this paper, we introduce a generative model of logical consequence relations. It formalises the process of how the truth value of a sentence is probabilistically generated from the probability distribution over states of the world. We show that the generative model characterises a classical consequence relation, paraconsistent consequence relation and nonmonotonic consequence relation. In particular, the generative model gives a new consequence relation that outperforms them in reasoning with inconsistent knowledge. We also show that the generative model gives a new classification algorithm that outperforms several representative algorithms in predictive accuracy and complexity on the Kaggle Titanic dataset.
\end{abstract}

\section{Introduction}
Bayes' theorem plays an important role today in various fields such as AI, neuroscience, cognitive science, statistical physics and bioinformatics. It underlies most modern approaches to uncertain reasoning in AI systems \cite{russell:09}. In neuroscience, it is often successfully used as a metaphor for functions of the cerebral cortex, which is the outer portion of the brain in charge of higher-order functions such as perception, memory, emotion and thought \cite{lee:03,knill:04,george:05,colombo:12,funamizu:16}. These successes of Bayesian methods give rise to the Bayesian brain hypothesis that the brain is a Bayesian machine \cite{friston:12,sanborn:16}.
\par
Logic concerns entailment (i.e. a consequence relation) whereas learning concerns prediction. They are both practices of the human brain. The Bayesian brain hypothesis thus leads to another hypothesis that there is a common Bayesian interpretation of entailment and prediction, which are traditionally studied in different disciplines. The interpretation is important for the following reasons. First, it gives a more unified view to critically assess the existing formalisms of entailment and prediction. Second, it has a potential to give a better explanation of how the human brain performs them. Third, it backs up the Bayesian brain hypothesis emerging from the field of neuroscience. In spite of the values, few research has focused on the unified interpretation in terms of Bayesian perspectives (see Section \ref{sec:discussion}).
%
%
\par
%
In this paper, we give a formal account of the process of how the truth value of a sentence is probabilistically generated from the probability distribution over states of the world. Our model based on this idea, often called a generative model, begins by assuming a probability distribution over states of the world, e.g. valuation functions in propositional logic. The probability of each state of the world represents how much it is natural, normal or typical. We then formalise the causal relation between each state of the world and each sentence. Let $w$ and $\alpha$ denote a state of the world and a sentence, respectively. The probability that $\alpha$ is true, denoted by $p(\alpha)$, will be shown to have
%
\begin{eqnarray*}
p(\alpha)=\sum_{w}p(\alpha,w)=\sum_{w}p(\alpha|w)p(w).
\end{eqnarray*}
The equation states that the probability of the truth value of $\alpha$ is the weighted average of the products of likelihood $p(\alpha|w)$ and prior $p(w)$ over all states of the world. Given a set $\Delta$ of sentences, we will show to have
%
%
\begin{eqnarray*}
p(\alpha|\Delta)=\sum_{w}p(\alpha|w)p(w|\Delta).
\end{eqnarray*}
This equation is known as a form of Bayesian learning \cite{russell:09}. It states that the probability of the truth value of $\alpha$ is the weighted average of the products of likelihood $p(\alpha|w)$ and posterior $p(w|\Delta)$ over all states of the world.
%
\par
We define Bayesian entailment using a conditional probability with a fixed probability threshold. Several important logical and machine learning properties are derived from the simple idea. The Bayesian entailment is shown to be identical to the classical consequence relation in reasoning with consistent knowledge. In addition, it is a paraconsistent consequence relation in reasoning with inconsistent knowledge, and it is a nonmonotonic consequence relation in deterministic situations. We moreover show that the Bayesian entailment outperforms several representative classification algorithms in predictive accuracy and complexity on the Kaggle Titanic dataset.
\par
This paper contributes to the field of commonsense reasoning by providing a simple inference principle that is correct in terms of classical logic, paraconsistent logic, nonmonotonic logic and machine learning. It gives a more general answer to the questions such as how to logically infer from inconsistent knowledge, how to rationally handle defeasibility of everyday reasoning, and how to probabilistically infer from noisy data without a conditional dependence assumption, which are all studied and explained individually.
\par
This paper is organised as follows. Section 2 gives a simple generative model for a Bayesian consequence relation. Section 3 shows logical and machine learning correctness of the generative model. Section 4 concludes with discussion of related work.
%
\section{Method}
We assume a syntax-independent logical language, denoted by $L$. It is logical in the sense that it is defined using only usual logical connectives such as $\lnot$, $\land$, $\lor$, $\rightarrow$, $\leftarrow$ and $\leftrightarrow$. It is syntax independent in the sense that it specifies no further syntax such as propositional or first-order language.
\par
An interpretation is an assignment of truth values to well-formed formulas. It is given by a valuation function in propositional logic, and is given by a structure and variable assignment in first-order logic. In this paper, we call them a possible world to make our discussion general. We assume a probability distribution over possible worlds to quantify the uncertainty of each possible world. Let $W$ denote a random variable for possible worlds, $w_{i}$ the $i$-th possible world, and $\phi_{i}$ the probability of the occurrence of $w_{i}$, i.e., $p(W=w_{i})=\phi_{i}$. Then, the probability distribution over possible worlds can be modelled as a categorical distribution with parameter $(\phi_{1},\phi_{2},...,\phi_{N})$ where $\sum_{i=1}^{N}\phi_{i}=1$ and $\phi_{i}\in[0,1]$, for all $i$. That is, we have
\begin{eqnarray*}
p(W)=(\phi_{1},\phi_{2},...,\phi_{N}).
\end{eqnarray*}
%
We assume that its prior distribution is statistically estimated from data. For all natural numbers $i$ and $j$, $\phi_{i}>\phi_{j}$ intuitively means that the interpretation specified by possible world $w_{i}$ is more natural, typical or normal than that of $w_{j}$, according to given data.
\par
In formal logic, truth values of formulas depend on possible worlds. The interpretation uniquely given in each possible world indeed assigns a certain truth value to every formula. In this paper, we consider the presence of noise in interpretation. We assume that every formula is a random variable whose realisations are 0 and 1, meaning false and true, respectively. Variable $\mu\in[0,1]$ denotes the probability that a formula is interpreted as being true (resp. false) in a possible world when it is actually true (resp. false) in the same possible world. $1-\mu$ is thus the probability that a formula is interpreted as being true (resp. false) in a possible world when it is actually false (resp. true) in the same possible world. For any possible worlds $w$ and formulas $\alpha$, we thus define the conditional probability of each truth value of $\alpha$ given $w$, as follows.
\begin{eqnarray*}
p(\alpha=1|W=w)=
\begin{cases}
\mu & \text{if } w\in\llbracket\alpha=1\rrbracket\\
1-\mu & \text{otherwise }
\end{cases}
\\%
p(\alpha=0|W=w)=
\begin{cases}
\mu & \text{if } w\in\llbracket\alpha=0\rrbracket\\
1-\mu & \text{otherwise }
\end{cases}
\end{eqnarray*}
Here, $\llbracket\alpha=1\rrbracket$ denotes the set of all possible worlds in which $\alpha$ is true, and $\llbracket\alpha=0\rrbracket$ the set of all possible worlds in which $\alpha$ is false. The above expressions can be simply written as a Bernoulli distribution with parameter $\mu$ where $0\leq \mu\leq 1$. That is, we have
%
%
\begin{eqnarray*}
p(\alpha|W=w)=\mu^{\llbracket\alpha\rrbracket_{w}}(1-\mu)^{1-\llbracket\alpha\rrbracket_{w}}.
\end{eqnarray*}
%
%
Here, $\llbracket\alpha\rrbracket$ is either $\llbracket\alpha=0\rrbracket$ or $\llbracket\alpha=1\rrbracket$, and $\llbracket\alpha\rrbracket_{w}$ denotes a function of $w$ and $\alpha$ that returns 1 if $w\in\llbracket\alpha\rrbracket$ and 0 otherwise.
\par
In formal logic, the truth values of formulas are independently determined from each possible world. In probabilistic terms, the truth values of any two formulas $\alpha_{1}$ and $\alpha_{2}$ are conditionally independent given a possible world $w$, i.e., $p(\alpha_{1},\alpha_{2}|w)=p(\alpha_{1}|w)p(\alpha_{2}|w)$\footnote{Note that this equation holds not only for atomic formulas but also for compound formulas. The independence, i.e., $p(\alpha_{1},\alpha_{2})=p(\alpha_{1})p(\alpha_{2})$, however, holds only for atomic formulas.}. Let $\Delta=\{\alpha_{1},\alpha_{2},...,\alpha_{N}\}$ be the set of $N$ formulas. We thus have
%
\begin{eqnarray*}
p(\Delta|W=w)=\prod_{n=1}^{N}p(\alpha_{n}|W=w).
\end{eqnarray*}
%
%
%
%
\par
So far, we defined prior distribution $p(W)$ as a categorical distribution with parameter $(\phi_{1},\phi_{2},...,\phi_{N})$ and model likelihood $p(\Delta|W)$ as Bernoulli distributions with parameter $\mu$. Given all of the parameters, they give the full joint distribution over all of the random variables. We call $\{p(\Delta|W)$, $p(W)\}$ the probabilistic-logical model, or simply the logical model. When the parameters of the logical model need to be specified, we write the logical model as $\{p(\Delta|W,\mu)$, $p(W|\phi_{1},\phi_{2},...,\phi_{N})\}$.
%
%
%
%
\par
Now, let $Pow(L)$ denote the powerset of logical language $L$. On the logical model, we define a consequence relation called a Bayesian entailment.
\begin{definition}[Bayesian entailment]\label{def:BE}
Let $\theta\in[0,1]$. $\vapprox_{\theta}\subseteq Pow(L)\times L$ is a Bayesian entailment with probability threshold $\theta$ if $\Delta\vapprox_{\theta}\alpha$ holds if and only if $p(\alpha|\Delta)\geq \theta$ holds. 
\end{definition}
It is obvious from the definition that $\vapprox_{\theta_{1}}\subseteq\vapprox_{\theta_{2}}$ holds, for all $\theta_{1}\in[0,1]$ and $\theta_{2}\in[0,\theta_{1}]$.
\par
The Bayesian entailment is actually Bayesian in the sense that it involves the following form of Bayesian learning where the probability of consequence $\alpha$ is weighted averages over the posterior distribution of all possible worlds in which premise $\Delta$ is true.
\begin{eqnarray*}
p(\alpha |\Delta)=\sum_{w}p(\alpha|w,\Delta)p(w|\Delta)=\sum_{w}p(\alpha|w)p(w|\Delta)
\end{eqnarray*} 
Therefore, the Bayesian entailment is an application of Bayesian prediction on the logical model.
\par
On the logical model, we also define a consequence relation called a maximum a posteriori (MAP) entailment.
\begin{definition}[Maximum a posteriori entailment]
$\vapprox_{MAP}$ $\subseteq$ $Pow(L) $ $\times$ $L$ is a maximum a posteriori entailment if $\Delta\vapprox_{MAP}\alpha$ holds if and only if there is $w_{MAP}\in\argmax_{w}p(w|\Delta)$ such that $w_{MAP}\in\llbracket\alpha\rrbracket$.
\end{definition}
Here, $w_{MAP}\in\argmax_{w}p(w|\Delta)$ is said to be a maximum a posteriori estimate. It is intuitively the most likely possible world given $\Delta$. The maximum a posteriori entailment can be seen as an approximation of the Bayesian entailment. They are equivalent under the assumption that posterior distribution $p(W|\Delta)$ has a sharp peak, meaning that a possible world is very normal, natural or typical. Under the assumption, we have $p(W|\Delta)\simeq 1$ if $W=w_{MAP}$ and $0$ otherwise, where $\simeq$ denotes an approximation. We thus have
\begin{eqnarray*}
p(\alpha|\Delta)&=&\sum_{w}p(\alpha|w)p(w|\Delta)\\
&\simeq& p(\alpha|w_{MAP})=
\begin{cases}
\mu & (w_{MAP}\in\llbracket\alpha\rrbracket)\\
1-\mu & (w_{MAP}\notin\llbracket\alpha\rrbracket)
\end{cases}
\end{eqnarray*}
Note that both the Bayesian entailment and the maximum a posteriori entailment are general in the sense that the parameters, i.e., $\mu$ and $(\phi_{1},\phi_{2},...,\phi_{N})$, of the logical model are all unspecified.
%
\par
The probability of the truth value of each formula is not primitive in the logical model. We thus guarantee that it satisfies the Kolmogorov axioms.
%
\begin{proposition}\label{kolmogorov}
Let $\alpha,\beta\in L$.
\begin{enumerate}
\item $0\leq p(\alpha=i)$ holds, for all $i\in\{0,1\}$.
\item $\sum_{i\in\{0,1\}}p(\alpha=i)=1$ holds.
\item $p(\alpha\lor\beta=i)=p(\alpha=i)+p(\beta=i)-p(\alpha\land\beta=i)$ holds, for all $i\in\{0,1\}$.
\end{enumerate}
\end{proposition}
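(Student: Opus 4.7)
The plan is to unfold $p(\alpha = i)$ via the marginalisation $p(\alpha=i)=\sum_{w}p(\alpha=i\mid w)p(w)$ and then verify each axiom from the definition of the Bernoulli likelihood. The key observation is that for every possible world $w$, exactly one of $w\in\llbracket\alpha=1\rrbracket$ or $w\in\llbracket\alpha=0\rrbracket$ holds, so $p(\alpha=i\mid w)\in\{\mu,1-\mu\}$ and $p(\alpha=0\mid w)+p(\alpha=1\mid w)=\mu+(1-\mu)=1$.

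For item 1, since $\mu\in[0,1]$ we have $p(\alpha=i\mid w)\geq 0$, and since $p(w)=\phi_{k}\geq 0$, the sum $p(\alpha=i)=\sum_{w}p(\alpha=i\mid w)p(w)$ is nonnegative. For item 2, I would swap the order of summation: $\sum_{i}p(\alpha=i)=\sum_{w}\bigl(\sum_{i}p(\alpha=i\mid w)\bigr)p(w)=\sum_{w}p(w)=1$, using the normalisation of the categorical distribution.

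For item 3, the cleanest route is to reduce everything to an affine function of world-mass. Define $P(S)=\sum_{w\in S}p(w)$ for $S\subseteq W$, and the affine map $f(x)=\mu x+(1-\mu)(1-x)=(2\mu-1)x+(1-\mu)$. A case split on whether $w$ lies in $\llbracket\gamma=i\rrbracket$ gives $p(\gamma=i)=f(P(\llbracket\gamma=i\rrbracket))$ for any formula $\gamma$. Now for $i=1$ the semantic identities $\llbracket\alpha\lor\beta=1\rrbracket=\llbracket\alpha=1\rrbracket\cup\llbracket\beta=1\rrbracket$ and $\llbracket\alpha\land\beta=1\rrbracket=\llbracket\alpha=1\rrbracket\cap\llbracket\beta=1\rrbracket$ combined with the classical inclusion–exclusion $P(A\cup B)=P(A)+P(B)-P(A\cap B)$ on worlds, together with the affinity of $f$ (so $f$ commutes with signed sums up to the constant term, which cancels because the signs sum to $+1$), yield axiom 3. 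For $i=0$, the analogous identities swap union and intersection ($\llbracket\alpha\lor\beta=0\rrbracket=\llbracket\alpha=0\rrbracket\cap\llbracket\beta=0\rrbracket$ and $\llbracket\alpha\land\beta=0\rrbracket=\llbracket\alpha=0\rrbracket\cup\llbracket\beta=0\rrbracket$), but the symmetric identity $P(A)+P(B)=P(A\cup B)+P(A\cap B)$ still produces the same rearrangement.

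The only subtle point is keeping the constant term $(1-\mu)$ in $f$ from spoiling the additive identity; this is handled by noting that in the inclusion–exclusion the coefficients $+1,+1,-1$ sum to $+1$, so the constant contribution on the right equals exactly $(1-\mu)$, matching the left-hand side. Everything else is direct substitution. I expect this constant-term bookkeeping to be the main (mild) obstacle; once the affine reformulation is in place, the three axioms follow from elementary set-theoretic identities on $\llbracket\cdot\rrbracket$.
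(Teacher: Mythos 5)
Your proof is correct, and items 1 and 2 coincide with the paper's argument (nonnegativity of the Bernoulli likelihood and prior; pointwise normalisation $p(\alpha=0\mid w)+p(\alpha=1\mid w)=1$ followed by summation over worlds). For item 3, however, you take a genuinely different route. The paper proves the per-world identity $p(\alpha\lor\beta=1\mid w)=p(\alpha=1\mid w)+p(\beta=1\mid w)-p(\alpha\land\beta=1\mid w)$ by an explicit four-way case analysis on the truth values of $\alpha$ and $\beta$ in $w$ (obtaining $1-\mu,\mu,\mu,\mu$ in the respective cases) and then sums against $p(w)$; the case $i=0$ is reduced to $i=1$ via item 2. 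You instead aggregate first, writing $p(\gamma=i)=f(P(\llbracket\gamma=i\rrbracket))$ with $f(x)=(2\mu-1)x+(1-\mu)$, and then push the classical inclusion--exclusion identity on world-sets through the affine map, checking that the constant terms cancel because the coefficients $+1,+1,-1$ sum to $+1$; your symmetric treatment of $i=0$ via $P(A)+P(B)=P(A\cup B)+P(A\cap B)$ is also self-contained rather than derived from item 2. Both arguments are sound. The paper's version is more elementary and requires no bookkeeping beyond four one-line cases; yours is more structural, isolates exactly why the noise parameter $\mu$ is harmless (affinity plus coefficient sum $+1$), and would extend immediately to any Boolean identity whose signed inclusion--exclusion coefficients sum to one.
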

\begin{proof}
See Appendix.
\end{proof}
\par
The next proposition shows that the logical model is sound in terms of logical negation.
\begin{proposition}\label{negation}
For all $\alpha\in L$, $p(\alpha=0)=p(\neg\alpha=1)$ holds.
\end{proposition}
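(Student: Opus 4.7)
The plan is to marginalise each side over all possible worlds $w$ and then reduce the claim to the obvious classical-semantic fact that $\alpha$ is false in $w$ iff $\neg\alpha$ is true in $w$. Concretely, by the law of total probability applied to the full joint distribution of the logical model,
\begin{eqnarray*}
p(\alpha=0)=\sum_{w}p(\alpha=0\mid W=w)\,p(W=w),\\
p(\neg\alpha=1)=\sum_{w}p(\neg\alpha=1\mid W=w)\,p(W=w).
\end{eqnarray*}

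Next I would unfold the two likelihoods using the Bernoulli noise model defined in the Method section. For each $w$, $p(\alpha=0\mid W=w)$ equals $\mu$ precisely when $w\in\llbracket\alpha=0\rrbracket$ and equals $1-\mu$ otherwise; similarly $p(\neg\alpha=1\mid W=w)$ equals $\mu$ precisely when $w\in\llbracket\neg\alpha=1\rrbracket$ and equals $1-\mu$ otherwise. Thus the two summands at $w$ coincide as soon as one shows $\llbracket\alpha=0\rrbracket=\llbracket\neg\alpha=1\rrbracket$.

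The latter set-equality is the only genuinely logical step and is a direct consequence of the classical truth condition for negation at a possible world: $w$ assigns $0$ to $\alpha$ iff $w$ assigns $1$ to $\neg\alpha$. Since the language $L$ is assumed to be built from the usual logical connectives and interpretations are standard, this holds for every formula $\alpha\in L$. Once this is invoked, the two likelihood functions agree pointwise on $W$, so the two marginal sums are term-by-term equal and the proposition follows.

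I do not anticipate any real obstacle: the argument is essentially bookkeeping in the generative model, with the only content being the appeal to the classical semantics of $\neg$. The mildly delicate point worth stating explicitly is that the equality $\llbracket\alpha=0\rrbracket=\llbracket\neg\alpha=1\rrbracket$ is a property of the underlying (noise-free) interpretation $\llbracket\cdot\rrbracket$, not of the probabilistic layer; the Bernoulli noise parameter $\mu$ is irrelevant because the same $\mu$ (and the same $1-\mu$) is used on both sides of the comparison at each world.
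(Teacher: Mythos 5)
Your proof is correct and follows essentially the same route as the paper's: marginalise over possible worlds and reduce the claim to a world-by-world identity of the Bernoulli likelihoods, which rests on the classical semantics of negation ($\llbracket\alpha=0\rrbracket=\llbracket\neg\alpha=1\rrbracket$). The only cosmetic difference is that the paper reaches the same pointwise identity via the complement $p(\alpha=1|w)=1-p(\neg\alpha=1|w)$ together with part (2) of Proposition~\ref{kolmogorov}, whereas you compare the two likelihoods directly; both arguments carry the same content.
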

\begin{proof}
See Appendix.
\end{proof}
In what follows, we thus replace $\alpha=0$ by $\lnot\alpha=1$ and then abbreviate $\lnot\alpha=1$ to $\lnot\alpha$.
%
Now, let's see an example in propositional logic.
%
\begin{table}[t]
\caption{Possible-world distribution and truth-value likelihoods.}
\label{ex.Int}
\begin{center}
\begin{tabular}{c|c|cc|cc}
%
& $p(W)$ & $rain$ & $wet$ & $p(rain|W)$ & $p(wet|W)$\\\hline
$w_{1}$ & $0.4$ & $0$ & $0$ & $1-\mu$ & $1-\mu$\\
$w_{2}$ & $0.2$ & $0$ & $1$ & $1-\mu$ & $\mu$\\
$w_{3}$ & $0.1$ & $1$ & $0$ & $\mu$ & $1-\mu$\\
$w_{4}$ & $0.3$ & $1$ & $1$ & $\mu$ & $\mu$
\end{tabular}
\end{center}
\end{table}
%
\begin{example}\label{ex:BE}
Let $rain$ and $wet$ be two propositional symbols meaning ``it is raining'' and ``the grass is wet'', respectively. The second column of Table \ref{ex.Int} shows the probability distribution over all valuation functions. The fifth and sixth columns show the likelihoods of the atomic propositions being true given a valuation function. Given $\mu=1$, predictive probability $p(rain|wet)$ is calculated as follows.
\begin{eqnarray*}
p(rain|wet)&=&\frac{\sum_{w}p(w)p(rain|w)p(wet|w)}{\sum_{w}p(w)p(wet|w)}\\
&=&\frac{\mu^{2}\phi_{4}+\mu(1-\mu)(\phi_{2}+\phi_{3})+(1-\mu)^{2}\phi_{1}}{\mu(\phi_{2}+\phi_{4})+(1-\mu)(\phi_{1}+\phi_{3})}\\
&=&\frac{0.3\mu^{2}+(0.2+0.1)\mu(1-\mu)+0.4(1-\mu)^{2}}{(0.2+0.3)\mu+(0.4+0.1)(1-\mu)}\\
&=&\frac{0.4\mu^{2}-0.5\mu+0.4}{0.5}=0.6
\end{eqnarray*}
Therefore, $\{wet\}\vapprox_{\theta} rain$ thus holds, for all $\theta\leq 0.6$. Figure \ref{dependency2} shows the Bayesian network visualising the dependency of the random variables and parameters used in this calculation.
\end{example}
%
\begin{figure}[t]
\begin{center}
 \includegraphics[scale=0.35]{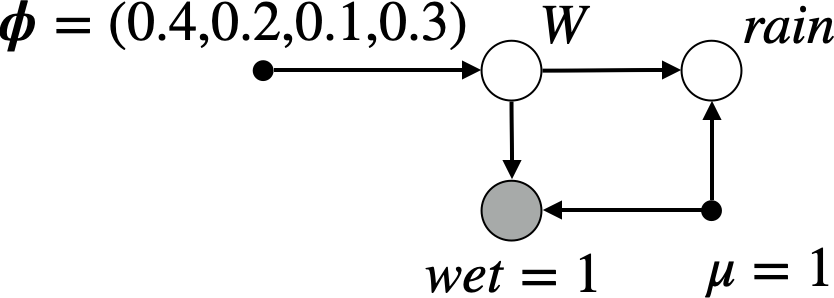}
  \caption{Bayesian network visualising the dependency of the elements of the probabilistic model.}
  \label{dependency2}
\end{center}
\end{figure}
%
%
%
\section{Correctness}
This section discusses logical and machine learning correctness of the logical model. The logical model is specialised in several ways to show that the Bayesian entailments defined on the specialised models perform key logical and machine learning tasks.
%

\subsection{Classicality}
Recall that a set $\Delta$ of formulas entails a formula $\alpha$ in classical logic, denoted by $\Delta\models\alpha$, if and only if $\alpha$ is true in every possible world in which $\Delta$ is true. In this paper, we call the Bayesian entailment defined on the logical model $\{p(\Delta|W,\mu=1),p(W|\phi_{1}=1/N,\phi_{2}=1/N,...,\phi_{N}=1/N)\}$ the Bayesian classical entailment. The model can be seen as an ideal specialisation of the logical model in the absence of data and noise. Each formula is interpreted without noise effect, i.e., $\mu=1$, in possible worlds that are equally likely, i.e., $(\phi_{1}=1/N,\phi_{2}=1/N,...,\phi_{N}=1/N)$. The following two theorems state that the Bayesian classical entailment $\vapprox_{1}$ is a proper fragment of the classical entailment, i.e., $\vapprox_{1}\subseteq\models$.
\begin{theorem}\label{thrm:1}
Let $\alpha\in L$, $\Delta\subseteq L$ and $\vapprox_{1}$ be the Bayesian classical entailment. If there is a model of $\Delta$ then $\Delta\vapprox_{1}\alpha$ if and only if $\Delta\models\alpha$.
%
\end{theorem}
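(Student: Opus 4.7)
The plan is to unwind the definition of $\Delta\vapprox_{1}\alpha$ into a ratio of cardinalities of sets of possible worlds, then recognise that ratio as the classical entailment condition. Since probabilities do not exceed $1$, $\vapprox_{1}$ is equivalent to requiring $p(\alpha|\Delta)=1$; so the goal is to show that under the Bayesian classical parameters $\mu=1$ and $\phi_{i}=1/N$, this equality is equivalent to $\Delta\models\alpha$, provided $\Delta$ is satisfiable.

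First I would apply Bayes' rule together with the conditional independence assumption noted earlier in the paper to write
\begin{equation*}
p(\alpha|\Delta)=\frac{\sum_{w}p(\alpha|w)\,p(\Delta|w)\,p(w)}{\sum_{w}p(\Delta|w)\,p(w)},
\qquad p(\Delta|w)=\prod_{n=1}^{N}p(\alpha_{n}|w).
\end{equation*}
Next, I substitute $\mu=1$ into the Bernoulli likelihood $p(\alpha|w)=\mu^{\llbracket\alpha\rrbracket_{w}}(1-\mu)^{1-\llbracket\alpha\rrbracket_{w}}$, which collapses to the indicator $\llbracket\alpha\rrbracket_{w}$. Consequently $p(\Delta|w)=\prod_{n}\llbracket\alpha_{n}\rrbracket_{w}$ equals $1$ when $w$ classically satisfies every member of $\Delta$ and $0$ otherwise; and $p(\alpha|w)\,p(\Delta|w)$ equals $1$ precisely when $w$ is a classical model of $\Delta\cup\{\alpha\}$.

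With $\phi_{i}=1/N$ for all $i$, the factor $p(w)=1/N$ cancels between numerator and denominator, yielding
\begin{equation*}
p(\alpha|\Delta)=\frac{|M(\Delta)\cap\llbracket\alpha\rrbracket|}{|M(\Delta)|},
\end{equation*}
where $M(\Delta)$ denotes the set of classical models of $\Delta$. The hypothesis that $\Delta$ has a model is used here exactly to guarantee $|M(\Delta)|\geq 1$, so the ratio is well-defined. Then $p(\alpha|\Delta)=1$ iff $M(\Delta)\subseteq\llbracket\alpha\rrbracket$, which is the definition of $\Delta\models\alpha$. This establishes both directions simultaneously.

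I do not anticipate a deep obstacle; the main care points are (i) invoking conditional independence to factor $p(\Delta|w)$, (ii) being explicit that the $\mu=1$ case makes the Bernoulli degenerate to a $\{0,1\}$ indicator (so there is no subtlety from the exponent $1-\llbracket\alpha\rrbracket_{w}$ being multiplied by $0^{0}$-style factors), and (iii) explicitly using the satisfiability of $\Delta$ to rule out the indeterminate $0/0$ in the conditional. Once these are laid out, the identification of $p(\alpha|\Delta)=1$ with classical entailment is immediate.
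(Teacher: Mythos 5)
Your proposal is correct and follows essentially the same route as the paper's proof: expand $p(\alpha|\Delta)$ via the conditional-independence factorisation, observe that $\mu=1$ kills the contribution of every non-model of $\Delta$, and reduce the conditional probability to the ratio of (the measure of) models of $\Delta\cup\{\alpha\}$ to models of $\Delta$, which equals $1$ exactly when $\Delta\models\alpha$. The only cosmetic difference is that you cancel the uniform weights to get a ratio of cardinalities, whereas the paper leaves the ratio in terms of $p(w)$; both are equivalent here.
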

\begin{proof}
Let $|\Delta|$ denote the cardinality of $\Delta$. Dividing possible worlds into the models of $\Delta$ and the others, we have
\begin{eqnarray*}
p(\alpha|\Delta)&=&\frac{\displaystyle{\sum_{w}p(\alpha|w)p(\Delta|w)p(w)}}{\displaystyle{\sum_{w}p(\Delta|w)p(w)}}\\
&=&\frac{\displaystyle{\sum_{w\in\llbracket\Delta\rrbracket}p(w)p(\alpha|w)\mu^{|\Delta|}+\sum_{w\notin\llbracket\Delta\rrbracket}p(w)p(\alpha|w)p(\Delta|w)}}{\displaystyle{\sum_{w\in\llbracket\Delta\rrbracket}p(w)\mu^{|\Delta|}+\sum_{w\notin\llbracket\Delta\rrbracket}p(w)p(\Delta|w)}}.
\end{eqnarray*}
$p(\Delta|w)=\prod_{\beta\in\Delta}p(\beta|w)=\prod_{\beta\in\Delta}\mu^{\llbracket\beta\rrbracket_{w}}(1-\mu)^{1-{\llbracket\beta\rrbracket_{w}}}$. For all $w\notin\llbracket\Delta\rrbracket$, there is $\beta\in\Delta$ such that $\llbracket\beta\rrbracket_{w}=0$. Thus, $p(\Delta|w)=0$ when $\mu=1$, for all $w\notin\llbracket\Delta\rrbracket$. We thus have
\begin{eqnarray*}
p(\alpha|\Delta)&=&\frac{\sum_{w\in\llbracket\Delta\rrbracket}p(w)\mu^{\llbracket\alpha\rrbracket_{w}}(1-\mu)^{1-\llbracket\alpha\rrbracket_{w}}\mu^{|\Delta|}}{\sum_{w\in\llbracket\Delta\rrbracket}p(w)\mu^{|\Delta|}}\\
&=&\frac{\sum_{w\in\llbracket\Delta\rrbracket}p(w)\llbracket\alpha\rrbracket_{w}}{\sum_{w\in\llbracket\Delta\rrbracket}p(w)}=\frac{\sum_{w\in\llbracket\alpha,\Delta\rrbracket}p(w)}{\sum_{w\in\llbracket\Delta\rrbracket}p(w)}.
%
\end{eqnarray*}
Now, $p(\alpha|\Delta)=\frac{\sum_{w\in\llbracket\alpha,\Delta\rrbracket}p(w)}{\sum_{w\in\llbracket\Delta\rrbracket}p(w)}=1$ if and only if $\llbracket\alpha\rrbracket\supseteq\llbracket\Delta\rrbracket$, i.e., $\Delta\models\alpha$.
\end{proof}
\begin{theorem}\label{thrm:2}
Let $\alpha\in L$, $\Delta\subseteq L$ and $\vapprox_{1}$ be the Bayesian classical entailment. If there is no model of $\Delta$ then $\Delta\vapprox_{1}\alpha$ implies $\Delta\models\alpha$, but not vice versa.
\end{theorem}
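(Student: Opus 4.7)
\emph{Proof plan.} When $\Delta$ has no model, the classical consequence relation collapses by ex falso quodlibet: $\Delta\models\alpha$ holds for every $\alpha\in L$. Hence the direction $\Delta\vapprox_{1}\alpha \Rightarrow \Delta\models\alpha$ is immediate, and the entire content of the theorem is to exhibit at least one $\alpha$ with $\Delta\models\alpha$ but $\Delta\not\vapprox_{1}\alpha$.

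My plan is to revisit the formula for $p(\alpha|\Delta)$ derived at the start of the proof of Theorem \ref{thrm:1} and specialise it to $\llbracket\Delta\rrbracket=\emptyset$. The first sums in the numerator and denominator vanish, so only contributions from worlds outside $\llbracket\Delta\rrbracket$ remain. For each such $w$, write $k_{w}$ for the number of sentences of $\Delta$ falsified at $w$; inconsistency forces $k_{w}\geq 1$ for all $w$. Then $p(\Delta|w)=\mu^{|\Delta|-k_{w}}(1-\mu)^{k_{w}}$, which is $0$ at $\mu=1$, so the conditional is formally $0/0$. I would resolve this by reading $\vapprox_{1}$ as the limit $\mu\to 1^{-}$ (the only continuous extension): factor $(1-\mu)^{k_{\min}}$ out of numerator and denominator, where $k_{\min}=\min_{w}k_{w}$, and every term with $k_{w}>k_{\min}$ dies. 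With the uniform prior cancelling, this leaves
\begin{eqnarray*}
\lim_{\mu\to 1^{-}} p(\alpha|\Delta) \;=\; \frac{|W_{\min}\cap\llbracket\alpha\rrbracket|}{|W_{\min}|}, \qquad W_{\min}=\{w : k_{w}=k_{\min}\}.
\end{eqnarray*}
This equals $1$ precisely when $\alpha$ holds in every world that minimally violates $\Delta$, a genuinely paraconsistent criterion that is strictly stronger than classical entailment from an inconsistent premise.

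A concrete counterexample then drops out. Take $\Delta=\{p,\lnot p\}$ in a propositional language containing another atom $q$, and let $\alpha=q$. Every world falsifies exactly one of $p,\lnot p$, so $k_{w}=1$ uniformly and $W_{\min}$ is the full set of worlds; the limit above becomes $1/2<1$, so $\Delta\not\vapprox_{1}q$, while $\Delta\models q$ holds vacuously. This establishes ``but not vice versa''.

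The main obstacle is entirely the $0/0$ at $\mu=1$: without a limit or continuity reading, the Bayesian entailment is undefined on inconsistent premises and the theorem would be vacuous. Once that subtlety is resolved, what remains is an elementary specialisation of the calculation already carried out for Theorem \ref{thrm:1}, together with the two-atom counting argument above.
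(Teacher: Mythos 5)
Your forward direction is exactly the paper's (ex falso), but your treatment of the ``not vice versa'' half proves a statement about a different relation than the one the theorem names. The Bayesian \emph{classical} entailment is defined on the model with $\mu=1$ fixed exactly, not as a limit $\mu\to 1^{-}$. Under that definition, when $\llbracket\Delta\rrbracket=\emptyset$ the conditional $p(\alpha|\Delta)$ is genuinely undefined ($0/0$), and the paper resolves this not by continuous extension but by the convention, made explicit in its proof, that $\Delta\not\vapprox_{\theta}\alpha$ whenever $p(\alpha|\Delta)$ is undefined. That convention is the whole content of the theorem: for inconsistent $\Delta$, \emph{every} $\alpha$ is a counterexample, since $\Delta\models\alpha$ always holds while $\Delta\vapprox_{1}\alpha$ never does. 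The paper exhibits this with $\Delta=\{\beta,\lnot\beta\}$, where the factor $\mu(1-\mu)$ appears in both numerator and denominator and kills both at $\mu=1$. The remark immediately after the theorem --- that ``nothing can be entailed from a contradiction using the Bayesian classical entailment'' --- depends on reading $\mu=1$ literally; your limit reading erases precisely the distinction the paper is setting up.

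The limit construction you propose is not wrong mathematics, but it is the paper's Bayesian \emph{paraconsistent} entailment, introduced in the following subsection as the model $\{\lim_{\mu\rightarrow 1}p(\Delta|W,\mu),\,p(W|\phi_{n}=1/N)\}$. Your set $W_{\min}$ of minimally violating worlds is exactly the paper's $(\!(\Delta)\!)=\argmax_{w}(\#_{w})$, your formula $\lim_{\mu\to 1^{-}}p(\alpha|\Delta)=|W_{\min}\cap\llbracket\alpha\rrbracket|/|W_{\min}|$ is its $\sum_{\hat{w}\in(\!(\Delta)\!)}\llbracket\alpha\rrbracket_{\hat{w}}p(\hat{w})/\sum_{\hat{w}\in(\!(\Delta)\!)}p(\hat{w})$, and your $\{p,\lnot p\}\not\vapprox_{1}q$ computation (yielding $p(q)=1/2$) is essentially the paper's proof of the corresponding Theorem~\ref{thrm:4} and of non-explosion. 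So you have correctly anticipated the next section rather than proved this theorem. To repair the proof as stated, drop the limit, invoke the undefinedness convention from Definition~\ref{def:BE}, and observe that any single $\alpha$ with inconsistent $\Delta$ witnesses the strictness.
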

\begin{proof}
($\Rightarrow$) If $\llbracket\Delta\rrbracket=\emptyset$ then $\Delta\models\alpha$, for all $\alpha$, in classical logic. ($\Leftarrow$) Definition \ref{def:BE} implies that $\Delta\vapprox_{\theta}\alpha$ if $p(\alpha|\Delta)\geq\theta$ holds, and $\Delta\not\vapprox_{\theta}\alpha$ if $p(\alpha|\Delta)<\theta$ holds or $p(\alpha|\Delta)$ is undefined. Given $\Delta=\{\beta,\lnot\beta\}$, the following derivation exemplifies that the predictive probability of a formula $\alpha$ is undefined due to division by zero.
\begin{eqnarray*}
p(\alpha|\beta,\lnot\beta)&=&\frac{\sum_{w}p(w)p(\alpha|w)p(\beta|w)p(\lnot\beta|w)}{\sum_{w}p(w)p(\beta|w)p(\lnot\beta|w)}\\
&=&\frac{\mu(1-\mu)\sum_{w}p(w)p(\alpha|w)}{\mu(1-\mu)\sum_{w}p(w)}~~~(\text{undefined if }\mu=1)
\end{eqnarray*}
\end{proof}
In classical logic, everything can be entailed from a contradiction. However, Theorem \ref{thrm:2} implies that nothing can be entailed from a contradiction using the Bayesian classical entailment. In the next section, we study a logical model that allows us to derive something useful from a contradiction.
%
\subsection{Paraconsistency}
In classical logic, the presence of contradictions in a knowledge base and the fact that the knowledge base entails everything are inseparable. In practice, this fact calls for truth maintenance of the knowledge base, which makes it difficult to scale up the knowledge base toward a useful AI application beyond toy problems.
\par
%
In this section, we consider the logical model with specific parameters such that $\mu$ approaches 1 and $(\phi_{1},\phi_{2},...,\phi_{N})$ is a uniform distribution, i.e, $\mu\rightarrow 1$ and $\phi_{n}=1/N$, for all $n$. Then, the specific logical model is written as $\{\lim_{\mu\rightarrow 1}$ $p(\Delta|W,\mu)$, $p(W|\phi_{1}=1/N$, $\phi_{2}=1/N$ ,..., $\phi_{N}=1/N)\}$. We call the Bayesian entailment defined on the logical model the Bayesian paraconsistent entailment. Similar to the classical model, the model is an ideal specialisation of the logical model in the absence of data, where formulas are interpreted without noise effect in every possible world that is equally likely.
\par
The following two theorems state that the Bayesian paraconsistent entailment $\vapprox_{1}$ is also a proper fragment of the classical entailment, i.e., $\vapprox_{1}\subseteq\models$.
\begin{theorem}\label{thrm:3}
Let $\alpha\in L$, $\Delta\subseteq L$ and $\vapprox_{1}$ be the Bayesian paraconsistent entailment. If there is a model of $\Delta$ then $\Delta\vapprox_{1}\alpha$ if and only if $\Delta\models\alpha$.
\end{theorem}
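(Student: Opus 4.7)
The plan is to mirror the computation used in Theorem~\ref{thrm:1}, but keep $\mu$ as a parameter and then take the limit $\mu \rightarrow 1$ at the end rather than substituting $\mu = 1$ from the outset. The delicate point is that here $p(\Delta \mid w)$ for $w \notin \llbracket\Delta\rrbracket$ is not exactly zero but only tends to zero; however, as long as $\llbracket\Delta\rrbracket$ is non-empty the denominator stays bounded away from $0$, so the limit of the ratio equals the ratio of the limits.

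Concretely, I would first split the sum defining $p(\alpha \mid \Delta)$ into contributions from models and non-models of $\Delta$, exactly as in the proof of Theorem~\ref{thrm:1}. For every $w \in \llbracket\Delta\rrbracket$ we have $p(\Delta \mid w) = \mu^{|\Delta|}$, and for every $w \notin \llbracket\Delta\rrbracket$ we have $p(\Delta \mid w) = \mu^{k}(1-\mu)^{|\Delta|-k}$ for some $k < |\Delta|$, so that $p(\Delta \mid w)$ contains at least one factor of $(1-\mu)$ and vanishes as $\mu \rightarrow 1$. The same holds for the numerator term $p(\alpha \mid w) p(\Delta \mid w)$ on every $w \notin \llbracket\Delta\rrbracket$, because $p(\alpha \mid w) \leq 1$. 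Under the uniform prior $\phi_n = 1/N$, the assumption that $\Delta$ has a model guarantees $\sum_{w \in \llbracket\Delta\rrbracket} p(w) > 0$, which keeps the denominator of the resulting limit strictly positive.

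Taking $\mu \rightarrow 1$ therefore gives
\begin{eqnarray*}
\lim_{\mu \rightarrow 1} p(\alpha \mid \Delta)
= \frac{\sum_{w \in \llbracket\Delta\rrbracket} p(w)\, \llbracket\alpha\rrbracket_{w}}{\sum_{w \in \llbracket\Delta\rrbracket} p(w)}
= \frac{\sum_{w \in \llbracket\alpha, \Delta\rrbracket} p(w)}{\sum_{w \in \llbracket\Delta\rrbracket} p(w)},
\end{eqnarray*}
which is the same closed form obtained in Theorem~\ref{thrm:1}. Finally, $\Delta \vapprox_{1} \alpha$ says that this limit is $\geq 1$, and since the ratio is bounded above by $1$, equality holds iff $\llbracket\Delta\rrbracket \subseteq \llbracket\alpha\rrbracket$, i.e.\ iff $\Delta \models \alpha$.

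The only real obstacle is justifying the interchange of limit and quotient. This is routine because the numerator and denominator are polynomials in $\mu$ on the uniform prior, and the denominator's limit $\sum_{w \in \llbracket\Delta\rrbracket} p(w)$ is non-zero exactly under the consistency hypothesis; so continuity of division at a non-zero denominator does the work. Note that this is also the crucial point that will \emph{fail} in the paraconsistent analogue of Theorem~\ref{thrm:2}, where $\llbracket\Delta\rrbracket = \emptyset$ forces us to extract the leading $(1-\mu)$ factor from both numerator and denominator before taking the limit, producing a well-defined but non-trivial value instead of $0/0$.
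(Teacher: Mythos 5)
Your proof is correct and follows essentially the same route as the paper, which simply asserts that the proof of Theorem~\ref{thrm:1} survives the limit operation; you additionally supply the justification the paper leaves implicit, namely that the non-model terms carry a factor of $(1-\mu)$ and the denominator's limit is strictly positive under the uniform prior when $\Delta$ has a model, so the limit passes through the quotient.
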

\begin{proof}
The proof of Theorem \ref{thrm:1} still holds under the presence of the limit operation.
\end{proof}
\begin{theorem}\label{thrm:4}
Let $\alpha\in L$, $\Delta\subseteq L$ and $\vapprox_{1}$ be the Bayesian paraconsistent entailment. If there is no model of $\Delta$ then $\Delta\vapprox_{1}\alpha$ implies $\Delta\models\alpha$, but not vice versa.
\end{theorem}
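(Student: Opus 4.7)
The proof splits into the two directions, and only one carries real content.

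For the implication $\Delta \vapprox_{1} \alpha \Rightarrow \Delta \models \alpha$, I would simply invoke classical \emph{ex falso quodlibet}: since the hypothesis assumes $\llbracket\Delta\rrbracket=\emptyset$, the relation $\Delta\models\alpha$ holds for every $\alpha\in L$, so the implication is vacuous. This is identical to the corresponding half of Theorem \ref{thrm:2}.

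The substantive part is the ``not vice versa'' clause, where I need a concrete counterexample showing that some $\alpha$ with $\Delta\models\alpha$ nevertheless fails $\Delta\vapprox_{1}\alpha$. I would reuse the inconsistent set $\Delta=\{\beta,\lnot\beta\}$ already analysed in the proof of Theorem \ref{thrm:2}. The critical difference from the classical case is that here we first take $\mu\to 1$ \emph{after} computing $p(\alpha|\Delta)$, rather than substituting $\mu=1$ directly. For every world $w$ we have $p(\beta|w)\,p(\lnot\beta|w)=\mu(1-\mu)$, independent of $w$, so both numerator and denominator of $p(\alpha\mid\beta,\lnot\beta)$ factor the common term $\mu(1-\mu)$. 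After cancellation, the expression becomes
\begin{equation*}
p(\alpha\mid\beta,\lnot\beta)=\frac{\sum_{w}p(w)\,p(\alpha|w)}{\sum_{w}p(w)},
\end{equation*}
which is now well defined at $\mu=1$ and at the limit equals $\sum_{w\in\llbracket\alpha\rrbracket}p(w)$, using $\sum_{w}p(w)=1$.

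With the uniform prior $\phi_{n}=1/N$ this limit equals $|\llbracket\alpha\rrbracket|/N$, which is strictly less than $1$ whenever $\alpha$ is not a tautology. Choosing any such $\alpha$ (for instance $\alpha=\beta\land\gamma$ with a fresh symbol $\gamma$, or any contingent formula over the vocabulary at hand) yields $\lim_{\mu\to 1}p(\alpha|\Delta)<1$, hence $\Delta\not\vapprox_{1}\alpha$ while $\Delta\models\alpha$ holds trivially. I do not anticipate a real obstacle: the only point to verify carefully is that the $\mu(1-\mu)$ factor is truly uniform across worlds so that cancellation is legitimate \emph{before} taking the limit, which is exactly what distinguishes the paraconsistent setting ($\mu\to 1$) from the classical one ($\mu=1$) and is what allows the model to deliver informative conclusions from contradictions rather than the trivialising ``anything goes'' of classical logic.
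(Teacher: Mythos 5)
Your proof is correct and follows essentially the same route as the paper: the forward direction is vacuous by \emph{ex falso quodlibet}, and the converse fails via an inconsistent premise set whose likelihood factor is constant over all worlds, so that after cancellation the limit $\mu\to 1$ yields $p(\alpha|\Delta)=p(\alpha)<1$ for any non-tautological $\alpha$. The only (immaterial) difference is that the paper's counterexample uses the single formula $\beta\land\lnot\beta$, giving a common factor $(1-\mu)$, whereas you use the pair $\{\beta,\lnot\beta\}$ with common factor $\mu(1-\mu)$ — exactly the computation the paper itself performs in part (3) of its paraconsistency theorem.
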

\begin{proof}
($\Rightarrow$) The proof of Theorem \ref{thrm:2} still holds. ($\Leftarrow$) Suppose $p(\alpha)<1$. The following derivation exemplifies $p(\alpha|\beta\land\lnot\beta)<1$.
\begin{flalign*}
p(\alpha|\beta\land\lnot\beta)&=\frac{\sum_{w}p(w)\lim_{\mu\rightarrow 1}p(\alpha|w)\lim_{\mu\rightarrow 1}p(\beta\land\lnot\beta|w)}{\sum_{w}p(w)\lim_{\mu\rightarrow 1}p(\beta\land\lnot\beta|w)}\\
&=\lim_{\mu\rightarrow 1}\frac{(1-\mu)\sum_{w}p(w)p(\alpha|w)}{(1-\mu)\sum_{w}p(w)}=\lim_{\mu\rightarrow 1}\frac{\sum_{w}p(w)p(\alpha|w)}{\sum_{w}p(w)}\\
&=\sum_{w}p(w)\lim_{\mu\rightarrow 1}p(\alpha|w)=p(\alpha)
\end{flalign*}
\end{proof}
\par
The Bayesian paraconsistent entailment handles reasoning with inconsistent knowledge in a proper way described below. Let $\alpha\in L$ and $\Delta\subseteq L$. For simplicity, we use symbol $\#_{w}$ to denote the number of formulas in $\Delta$ that are true in $w$, i.e. $\#_{w}=\sum_{\beta\in\Delta}\llbracket\beta\rrbracket_{w}$, and symbol $(\!(\Delta)\!)$ to denote the set of possible worlds in which the maximum number of formulas in $\Delta$ are true, i.e., $(\!(\Delta)\!)=\argmax_{w}(\#_{w})$. $(\!(\Delta)\!)$ is thus the set of models of $\Delta$, i.e., $(\!(\Delta)\!)=\llbracket\Delta\rrbracket$, if and only if there is a model of $\Delta$. Now, by case analysis of the possible worlds in $(\!(\Delta)\!)$ and the others, we have 
%
\begin{flalign*}
p(\alpha|\Delta)&=\lim_{\mu\rightarrow 1}\frac{\sum_{w}p(\alpha|w)p(w)p(\Delta|w)}{\sum_{w}p(w)p(\Delta|w)}\\
&=\lim_{\mu\rightarrow 1}\frac{\displaystyle{\sum_{\hat{w}\in(\!(\Delta)\!)}p(\alpha|\hat{w})p(\hat{w})p(\Delta|\hat{w})+\sum_{w\notin(\!(\Delta)\!)}p(\alpha|w)p(w)p(\Delta|w)}}{\displaystyle{\sum_{\hat{w}\in(\!(\Delta)\!)}p(\hat{w})p(\Delta|\hat{w})+\sum_{w\notin(\!(\Delta)\!)}p(w)p(\Delta|w)}}.
\end{flalign*}
%
$p(\Delta|w)=\prod_{\beta\in\Delta}p(\beta|w)=\mu^{\#_{w}}(1-\mu)^{|\Delta|-\#_{w}}$ holds, for all $w$. Since $\#_{\hat{w}}$ has the same value for all $\hat{w}\in(\!(\Delta)\!)$, we can simplify the fraction by dividing the denominator and numerator by $(1-\mu)^{|\Delta|-\#_{\hat{w}}}$. The fraction inside of the limit operator is now given by
\begin{eqnarray*}
\frac{\displaystyle{\sum_{\hat{w}\in(\!(\Delta)\!)}p(\alpha|\hat{w})p(\hat{w})\mu^{\#_{\hat{w}}}+\sum_{w\notin(\!(\Delta)\!)}p(\alpha|w)p(w)\mu^{\#_{w}}(1-\mu)^{\#_{\hat{w}}-\#_{w}}}}{\displaystyle{\sum_{\hat{w}\in(\!(\Delta)\!)}p(\hat{w})\mu^{\#_{\hat{w}}}+\sum_{w\notin(\!(\Delta)\!)}p(w)\mu^{\#_{w}}(1-\mu)^{\#_{\hat{w}}-\#_{w}}}}.
\end{eqnarray*}
Applying the limit operation to the second terms of the denominator and numerator, we have
\begin{eqnarray*}
p(\alpha|\Delta)&=&\lim_{\mu\rightarrow 1}\frac{\sum_{\hat{w}\in(\!(\Delta)\!)}p(\alpha|\hat{w})p(\hat{w})\mu^{\#_{\hat{w}}}}{\sum_{\hat{w}\in(\!(\Delta)\!)}p(\hat{w})\mu^{\#_{\hat{w}}}}\\
&=&\lim_{\mu\rightarrow 1}\frac{\sum_{\hat{w}\in(\!(\Delta)\!)}\mu^{\llbracket\alpha\rrbracket_{\hat{w}}}(1-\mu)^{1-\llbracket\alpha\rrbracket_{\hat{w}}}p(\hat{w})\mu^{\#_{\hat{w}}}}{\sum_{\hat{w}\in(\!(\Delta)\!)}p(\hat{w})\mu^{\#_{\hat{w}}}}=\frac{\sum_{\hat{w}\in(\!(\Delta)\!)}\llbracket\alpha\rrbracket_{\hat{w}}p(\hat{w})}{\sum_{\hat{w}\in(\!(\Delta)\!)}p(\hat{w})}
\end{eqnarray*}
From the above derivation, $\Delta\vapprox_{1}\alpha$ holds if and only if $\llbracket\alpha\rrbracket\supseteq (\!(\Delta)\!)$. For the sake of intuition, let us say that $\Delta$ is almost true in a possible world $w$ if $w\in(\!(\Delta)\!)\setminus\llbracket\Delta\rrbracket$. Then, $\Delta\vapprox_{1}\alpha$ states that
\begin{itemize}
\item if $\Delta$ has a model then $\alpha$ is true in every possible world in which $\Delta$ is true, i.e., $\Delta\models\alpha$, and
\item if $\Delta$ has no model then $\alpha$ is true in every possible world in which $\Delta$ is almost true.
\end{itemize}
%
\begin{example}
Let $a$ and $b$ be two propositional variables. Suppose the uniform prior distribution over possible worlds of the two variables.
\begin{itemize}
\item $p(a|a,b,\lnot b)=1$ holds. Therefore, $a,b,\lnot b\vapprox_{\theta}a$ holds if and only if $\theta=1$. 
\item $p(a|a\land b,\lnot b)=\frac{2}{3}$ holds. Therefore, $a\land b,\lnot b\vapprox_{\theta}a$ holds if and only if $\theta\leq \frac{2}{3}$.
\item $p(a|a\land b\land\lnot b)=\frac{1}{2}$ holds. Therefore, $a\land b\land\lnot b\vapprox_{\theta}a$ holds if and only if $\theta\leq \frac{1}{2}$.
\end{itemize}
\end{example}
\par
Let us examine abstract inferential properties of the Bayesian paraconsistent entailment. Mathematically, let $\alpha,\beta\in L$, $\Delta\subseteq L$ and $\vdash$ be a consequence relation over logical language $L$, i.e., $\vdash\subseteq Pow(L)\times L$. We call tuple $(L,\vdash)$ a logic. A logic is said to be non-contradictory, non-trivial, and explosive if it satisfies the following respective principles.
%
\begin{itemize}
\item Non-contradiction: $\exists\Delta\forall\alpha(\Delta\not\vdash\alpha~or~\Delta\not\vdash\lnot\alpha)$
\item Non-triviality: $\exists\Delta\exists\alpha(\Delta\not\vdash\alpha)$
\item Explosion: $\forall\Delta\forall\alpha\forall\beta(\Delta,\alpha,\lnot\alpha\vdash\beta)$
\end{itemize}
A logic is paraconsistent if and only if it is not explosive, and is sometimes called dialectical if it is contradictory \cite{carnielli}. The following theorem states that the Bayesian paraconsistent entailment is paraconsistent, but not dialectical.
%
%
%
%
\begin{theorem}
Let $\theta\in(0.5,1]$. The Bayesian paraconsistent entailment $\vapprox_{\theta}$ satisfies the principles of non-contradiction and non-triviality, but does not satisfy the principle of explosion.
\end{theorem}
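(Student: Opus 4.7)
The plan rests on a single algebraic identity: $p(\alpha\mid\Delta) + p(\lnot\alpha\mid\Delta) = 1$ whenever the paraconsistent conditional is defined. Pointwise, the Bernoulli form gives $p(\alpha\mid w) + p(\lnot\alpha\mid w) = \mu^{\llbracket\alpha\rrbracket_{w}}(1-\mu)^{1-\llbracket\alpha\rrbracket_{w}} + \mu^{1-\llbracket\alpha\rrbracket_{w}}(1-\mu)^{\llbracket\alpha\rrbracket_{w}} = 1$, and averaging against a common denominator preserves the sum. For inconsistent $\Delta$ the identity is transparent from the closed form $\sum_{\hat w\in(\!(\Delta)\!)}\llbracket\alpha\rrbracket_{\hat w}p(\hat w)/\sum_{\hat w\in(\!(\Delta)\!)}p(\hat w)$ derived immediately above the theorem, since $\llbracket\alpha\rrbracket_{\hat w} + \llbracket\lnot\alpha\rrbracket_{\hat w} = 1$. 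With this identity in hand, non-contradiction is automatic for \emph{every} $\Delta$: if $p(\alpha\mid\Delta)\geq\theta>0.5$ then $p(\lnot\alpha\mid\Delta)\leq 1-\theta<0.5<\theta$, forcing $\Delta\not\vapprox_{\theta}\lnot\alpha$, so any fixed choice of $\Delta$ witnesses the existential in the principle.

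For non-triviality I would take the minimal witness $\Delta=\{a\}$, $\alpha=\lnot a$ with $a$ a propositional variable. Since $\{a\}$ has a model, Theorem \ref{thrm:3} reduces $\{a\}\vapprox_{1}\lnot a$ to $\{a\}\models\lnot a$, which fails; equivalently, the closed form for consistent premises gives $p(\lnot a\mid a)=0<\theta$. For the failure of explosion I would reuse the displayed computation inside the proof of Theorem \ref{thrm:4}, which already shows $p(\alpha\mid\beta\land\lnot\beta) = p(\alpha)$. Working in a propositional language with two symbols $a,b$ under the uniform prior used throughout this section, one obtains $p(b\mid a\land\lnot a)=p(b)=1/2<\theta$, so $\{a\land\lnot a\}\not\vapprox_{\theta}b$, explicitly negating $\forall\Delta\forall\alpha\forall\beta(\Delta,\alpha,\lnot\alpha\vapprox_{\theta}\beta)$.

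The only step requiring any care is checking that the negation identity survives the limit $\mu\to 1$ for inconsistent $\Delta$, since \emph{a priori} both numerator and denominator vanish. The closed-form expression preceding the theorem dissolves the issue: after cancelling the common factor $(1-\mu)^{|\Delta|-\#_{\hat w}}$, both sides become finite and strictly positive under the uniform prior, and additivity of the indicator over $\{\alpha,\lnot\alpha\}$ delivers the identity directly. No analytic subtlety remains; the conceptual content of the theorem is simply that $\theta>0.5$ together with a symmetric negation semantics rules out contradictory output, while the paraconsistent limit leaves enough probability mass on non-models to genuinely block explosion.
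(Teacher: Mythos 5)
Your proposal is correct in substance and rests on the same key identity as the paper's proof, namely that the probability of a formula and of its negation sum to one, so that $\theta>0.5$ forbids both exceeding the threshold. The differences are in scope and in the witnesses. The paper establishes non-contradiction only for the empty premise set, computing $p(\alpha)+p(\lnot\alpha)=1$ from the prior; you prove the stronger statement that $p(\alpha\mid\Delta)+p(\lnot\alpha\mid\Delta)=1$ for \emph{every} $\Delta$ via the closed form $\sum_{\hat w\in(\!(\Delta)\!)}\llbracket\alpha\rrbracket_{\hat w}p(\hat w)/\sum_{\hat w\in(\!(\Delta)\!)}p(\hat w)$, which buys you non-contradiction uniformly in the premises and correctly disposes of the $0/0$ worry at $\mu\to 1$. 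For non-triviality the paper argues indirectly (if $p(\alpha)>0.5$ for all $\alpha$, this contradicts part (1)), whereas your explicit witness $p(\lnot a\mid a)=0$ via Theorem~\ref{thrm:3} is more concrete; both are fine.

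One point needs fixing in the explosion part. The principle is stated as $\forall\Delta\forall\alpha\forall\beta(\Delta,\alpha,\lnot\alpha\vdash\beta)$, so a counterexample must use a premise set of the form $\Delta\cup\{\alpha,\lnot\alpha\}$ with the two formulas listed separately. Your witness $\{a\land\lnot a\}\not\vapprox_{\theta}b$, borrowed from the proof of Theorem~\ref{thrm:4}, uses the single conjunctive formula and therefore does not instantiate the principle as written. The repair is immediate: for $\Delta=\emptyset$ and premises $\{a,\lnot a\}$, every world satisfies exactly one of $a,\lnot a$, so $p(a\mid w)p(\lnot a\mid w)=\mu(1-\mu)$ for all $w$, the factor cancels, and $p(b\mid a,\lnot a)=p(b)=1/2<\theta$ under the uniform prior --- which is exactly the computation the paper performs. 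With that substitution your argument is complete.
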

\begin{proof}
(1) It is sufficient to show $\nexists\alpha(\vapprox_{\theta}\alpha$ and $\vapprox_{\theta}\lnot\alpha)$. From definition, we show there is no $\alpha$ such that $p(\alpha)\geq\theta$ and $p(\lnot\alpha)\geq\theta$. We have
\begin{flalign*}
p(\alpha)&=\sum_{w}\lim_{\mu\rightarrow 1}p(\alpha|w)p(w)=\sum_{w}\lim_{\mu\rightarrow 1}\mu^{\llbracket\alpha\rrbracket_{w}}(1-\mu)^{1-\llbracket\alpha\rrbracket_{w}}p(w)\\
&=\lim_{\mu\rightarrow 1}\{\mu\sum_{w\in\llbracket\alpha\rrbracket} p(w)+(1-\mu)\sum_{w\notin\llbracket\alpha\rrbracket} p(w)\}=\sum_{w\in\llbracket\alpha\rrbracket} p(w)\\
p(\lnot\alpha)&=\sum_{w}\lim_{\mu\rightarrow 1}p(\lnot\alpha|w)p(w)=\sum_{w}\lim_{\mu\rightarrow 1}\mu^{1-\llbracket\alpha\rrbracket_{w}}(1-\mu)^{\llbracket\alpha\rrbracket_{w}}p(w)\\
&=\lim_{\mu\rightarrow 1}\{\mu\sum_{w\notin\llbracket\alpha\rrbracket} p(w)+(1-\mu)\sum_{w\in\llbracket\alpha\rrbracket} p(w)\}=\sum_{w\notin\llbracket\alpha\rrbracket} p(w)
\end{flalign*}
Now, $p(\alpha)+p(\lnot\alpha)=\sum_{w\in\llbracket\alpha\rrbracket} p(w)+\sum_{w\notin\llbracket\alpha\rrbracket} p(w)=\sum_{w}p(w)=1$.
(2) It is sufficient to show $\exists\alpha\not\vapprox_{\theta}\alpha$. We show there is $\alpha$ such that $p(\alpha)\leq 0.5$. Using proof by contradiction, we assume $p(\alpha)> 0.5$ holds, for all $\alpha$. This contradicts (1).
(3) It is sufficient to show $\exists\alpha\exists\beta(\alpha,\lnot\alpha\not\vapprox_{\theta}\beta)$ holds. $p(\beta|\alpha,\lnot\alpha)=p(\beta)$ is shown as follows.
\begin{eqnarray*}
p(\beta|\alpha,\lnot\alpha)&=&\lim_{\mu\rightarrow 1}\frac{\sum_{w}p(\alpha|w)p(\lnot\alpha|w)p(\beta|w)p(w)}{\sum_{w}p(\alpha|w)p(\lnot\alpha|w)p(w)}\\
&=&\lim_{\mu\rightarrow 1}\frac{\mu(1-\mu)\sum_{w}p(\beta|v)p(w)}{\mu(1-\mu)\sum_{w}p(w)}=p(\beta)
\end{eqnarray*}
The principle of explosion does not hold when $p(\beta)<1$.
\end{proof}

\subsection{Non-monotonicity}
In classical logic, whenever a sentence is a logical consequence of a set of sentences, then the sentence is also a consequence of an arbitrary superset of the set. This property called monotonicity cannot be expected in commonsense reasoning where having new knowledge often invalidates a conclusion. A practical knowledge-based system with this property is possible under the unrealistic assumption that every rule in the knowledge base sufficiently covers possible exceptions.
%
\par
A preferential entailment \cite{shoham:87} is a general approach to a nonmonotonic consequence relation. It is defined on a preferential structure $({\cal W},\succ)$, where ${\cal W}$ is a set of valuation functions of propositional logic and $\succ$ is an irreflexive and transitive relation on ${\cal W}$. $w_{1}\succ w_{2}$ represents that $w_{1}$ is preferable\footnote{For the sake of simplicity, we do not adopt the common practice in logic that $w_{2}\succ w_{1}$ denotes $w_{1}$ is preferable to $w_{2}$.} to $w_{2}$ in the sense that $w_{1}$ is more normal, typical or natural than $w_{2}$. Given a preferential structure $({\cal W},\succ)$, $\alpha$ is preferentially entailed by $\Delta$, denoted by $\Delta\vsim_{({\cal W},\succ)}\alpha$, if $\alpha$ is true in all $\succ$-maximal\footnote{$\succ$ has to be smooth (or stuttered) \cite{kraus:90} so that a maximal model certainly exists.} models of $\Delta$.
\par
Given a preferential structure $({\cal W},\succ)$, we consider the logical model with specific parameters $\mu\rightarrow 1$ and $(\phi_{1},\phi_{2},...,\phi_{N})$ such that, for all $w_{1}$ and $w_{2}$ in ${\cal W}$, if $w_{1}\succ w_{2}$ then $\phi_{1}\geq\phi_{2}$.\footnote{If we assume a function mapping $w_{n}$ to $\phi_{n}$, for all $n$, then the function satisfying the condition is said to be order-preserving.} We call the maximum a posteriori entailment defined on the logical model the maximum a posteriori entailment with respect to $({\cal W}, \succ)$. The following two theorems show the relationship between the maximum a posteriori entailment and preferential entailment.
\begin{theorem}\label{thrm:MAP1}
Let $({\cal W},\succ)$ be a preferential structure and $\vapprox_{MAP}$ be a maximum a posteriori entailment with respect to $({\cal W},\succ)$. If there is a model of $\Delta$ then $\Delta\vsim_{({\cal W},\succ)}\alpha$ implies $\Delta\vapprox_{MAP}\alpha$.
\end{theorem}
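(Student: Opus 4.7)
The plan is to reduce the MAP estimation problem on the limit model $\mu\to 1$ to prior-maximisation over the models of $\Delta$, and then exploit the order-preserving condition on the prior together with the smoothness of $\succ$ to extract a $\succ$-maximal MAP estimate at which $\alpha$ is forced to hold by preferential entailment.

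First I would note that $p(\Delta|w)=\mu^{|\Delta|}$ when $w\in\llbracket\Delta\rrbracket$, whereas for $w\notin\llbracket\Delta\rrbracket$ the likelihood $p(\Delta|w)$ carries at least one factor of $(1-\mu)$. Since $\llbracket\Delta\rrbracket\neq\emptyset$ by hypothesis, normalising $p(w|\Delta)\propto p(\Delta|w)p(w)$ and passing to the limit $\mu\to 1$ collapses the posterior onto $\llbracket\Delta\rrbracket$ and leaves it proportional to the prior there. Therefore
\[
\argmax_{w} p(w|\Delta)\;=\;\argmax_{w\in\llbracket\Delta\rrbracket}p(w),
\]
reducing the MAP question to one about the prior restricted to the models of $\Delta$.

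Next I would pick any $w_{MAP}$ in this argmax and promote it to a $\succ$-maximal one. By smoothness of $\succ$ (cf.\ the paper's footnote on the preferential structure), either $w_{MAP}$ is already $\succ$-maximal in $\llbracket\Delta\rrbracket$, or there is a $\succ$-maximal $w^{\star}\in\llbracket\Delta\rrbracket$ with $w^{\star}\succ w_{MAP}$. In the latter case the order-preserving condition gives $p(w^{\star})\geq p(w_{MAP})$, and since $w_{MAP}$ already attains the maximum, equality must hold, so $w^{\star}$ is itself a MAP estimate. Because $\Delta\vsim_{({\cal W},\succ)}\alpha$, every $\succ$-maximal model of $\Delta$ lies in $\llbracket\alpha\rrbracket$; hence $w^{\star}\in\llbracket\alpha\rrbracket$, and $\Delta\vapprox_{MAP}\alpha$ follows from the definition.

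The main obstacle is this second step: the order-preserving condition is non-strict, so a world of maximum prior weight need not itself be $\succ$-maximal and could in principle sit strictly below several $\succ$-maximal rivals that happen to share its prior value. Smoothness, combined with the fact that $\vapprox_{MAP}$ only asks for the existence of \emph{some} MAP estimate lying in $\llbracket\alpha\rrbracket$ rather than for the condition to hold at every argmax, is precisely the ingredient that repairs the argument.
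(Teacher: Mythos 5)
Your proof is correct and follows essentially the same route as the paper's: both use the order-preserving condition together with smoothness to exhibit a single world that is simultaneously a MAP estimate and a $\succ$-maximal model of $\Delta$, at which preferential entailment forces $\alpha$ to hold. Your version is if anything slightly more careful, since you make explicit the reduction of $\argmax_{w}p(w\mid\Delta)$ to $\argmax_{w\in\llbracket\Delta\rrbracket}p(w)$ in the $\mu\to 1$ limit, and you run the smoothness argument upward from a MAP estimate rather than downward from a $\succ$-maximal model.
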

\begin{proof}
Since $\geq$ is a linear extension of $\succ$ given ${\cal W}$, if $w_{1}\succ w_{2}$ then $\phi_{1}\geq \phi_{2}$, for all $w_{1}, w_{2}\in{\cal W}$. Thus, if $w_{i}$ is $\succ$-maximal then $\phi_{i}$ is maximal or there is another $\succ$-maximal $w_{j}$ such that $\phi_{j}\geq \phi_{i}$. Therefore, there is $w^{*}$ such that $w^{*}$ is a $\succ$-maximal model of $\Delta$ and $w^{*}\in\argmax_{w}p(w|\Delta)$. $\alpha$ is true in $w^{*}$ since $\Delta\vsim_{({\cal W},\succ)}\alpha$.
\end{proof}
\begin{theorem}\label{thrm:MAP2}
Let $({\cal W},\succ)$ be a preferential structure and $\vapprox_{MAP}$ be a maximum a posteriori entailment with respect to $({\cal W},\succ)$. If there is no model of $\Delta$ then $\Delta\vapprox_{MAP}\alpha$ implies $\Delta\vsim_{({\cal W},\succ)}\alpha$, but not vice versa.
\end{theorem}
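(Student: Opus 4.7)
The plan is to split the statement into the forward implication and a counterexample witnessing that the converse fails.

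For the forward direction ($\Delta \vapprox_{MAP} \alpha \Rightarrow \Delta \vsim_{({\cal W},\succ)} \alpha$), I would observe that $\llbracket \Delta \rrbracket = \emptyset$ means $\Delta$ has no $\succ$-maximal model at all, so the universally quantified condition defining preferential entailment is vacuously satisfied for every $\alpha \in L$. Hence $\Delta \vsim_{({\cal W},\succ)} \alpha$ holds for any $\alpha$, and the implication is immediate, paralleling the forward direction of Theorem \ref{thrm:2}.

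For the counterexample, since preferential entailment holds vacuously whenever $\Delta$ has no model, it suffices to exhibit a single $\alpha$ with $\Delta \not\vapprox_{MAP} \alpha$. The first step would be to characterise the MAP estimates in the limit $\mu \to 1$. From the logical model, $p(w|\Delta) \propto \phi_w \, \mu^{\#_w}(1-\mu)^{|\Delta|-\#_w}$, so for $\mu < 1$ the ratio of posteriors of two worlds $w_1,w_2$ is $(\phi_1/\phi_2)(\mu/(1-\mu))^{\#_{w_1}-\#_{w_2}}$, which diverges as $\mu \to 1$ whenever $\#_{w_1} > \#_{w_2}$. In the limit, $\argmax_w p(w|\Delta)$ therefore concentrates on $(\!(\Delta)\!)$, and within it on the worlds of maximum prior $\phi_w$. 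Then for a concrete counterexample I would take a propositional language on $\{p,q\}$, set $\Delta = \{p, \lnot p\}$ (no model, and every valuation satisfies exactly one of its formulas, so $(\!(\Delta)\!) = {\cal W}$), and pick $\succ = \emptyset$ together with any $\phi$ peaking uniquely at the world $w^*$ assigning $p=1, q=1$ (the order-preserving condition is vacuous here). Then $w^*$ is the unique MAP estimate, $w^* \notin \llbracket \lnot q \rrbracket$, so $\Delta \not\vapprox_{MAP} \lnot q$, whereas $\Delta \vsim_{({\cal W},\succ)} \lnot q$ holds vacuously.

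The main obstacle I anticipate is handling the $\mu \to 1$ limit inside the $\argmax$: naively setting $\mu = 1$ makes $p(\Delta|w) = 0$ for every $w$ and leaves the posterior (and hence the argmax) undefined, exactly the pathology exploited in the reverse direction of Theorem \ref{thrm:2}. The ratio argument above is the fix, giving a clean, limit-stable description of the MAP estimates on which the rest of the counterexample then rests routinely.
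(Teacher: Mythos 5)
Your proposal is correct and follows essentially the same route as the paper: the forward direction is the same vacuity argument, and the counterexample is the paper's $\Delta=\{\beta,\lnot\beta\}$ construction in slightly different clothing (every world satisfies exactly one member of $\Delta$, so the posterior reduces to the prior and the MAP world is the prior-maximal one, which is chosen to falsify the goal formula). Your extra care with the $\mu\to 1$ limit via the posterior-ratio argument is sound but not needed here, since the factor $\mu(1-\mu)$ cancels identically for all $\mu\in(0,1)$ before the limit is taken, which is exactly what the paper does.
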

\begin{proof}
($\Rightarrow$) From the definition, $\Delta\vsim_{({\cal W},\succ)}\alpha$ holds, for all $\alpha$, when $\Delta$ has no model. ($\Leftarrow$) Let $\alpha,\beta\in L$. Suppose $(\phi_{1},\phi_{2},..,\phi_{N})$ such that $w_{1}\notin\llbracket\alpha\rrbracket$ and $\phi_{n}>\phi_{n+1}$, for all $1\leq n\leq N-1$. Now, $p(W|\beta,\lnot\beta)=p(W)$ is shown as follows.
\begin{eqnarray*}
p(W|\beta,\lnot\beta)&=&\frac{p(\beta|W)p(\lnot\beta|W)p(W)}{\sum_{w}p(\beta|w)p(\lnot\beta|w)p(w)}\\
&=&\frac{\mu(1-\mu)p(W)}{\mu(1-\mu)\sum_{w}p(w)}=p(W)
\end{eqnarray*}
Although $w_{1}=\argmax_{w}p(w|\beta,\lnot\beta)$, $w_{1}\notin\llbracket\alpha\rrbracket$.
\end{proof}
\par
When a preferential structure is assumed to be a total order, the maximum a posteriori entailment with respect to the preferential structure becomes a fragment of the preferential entailment.
\begin{theorem}\label{thrm:MAP3}
Let $({\cal W},\succ)$ be a totally ordered preferential structure and $\vapprox_{MAP}$ be a maximum a posteriori entailment with respect to $({\cal W},\succ)$. If there is a model of $\Delta$ then $\Delta\vsim_{({\cal W},\succ)}\alpha$ if and only if $\Delta\vapprox_{MAP}\alpha$.
\end{theorem}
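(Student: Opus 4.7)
The forward implication falls out of Theorem~\ref{thrm:MAP1} without modification, since its proof nowhere uses totality of $\succ$; the entire new content of the theorem therefore lies in the converse direction, which I treat next.

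For the converse, the plan is to exploit totality to localise $\argmax_{w}p(w|\Delta)$ around the unique $\succ$-maximal model $w^{*}$ of $\Delta$. First I would reuse the calculation already carried out in the paraconsistency section: in the limit $\mu\rightarrow 1$ the posterior $p(w|\Delta)$ concentrates on $\llbracket\Delta\rrbracket$, and within that set it is proportional to $\phi_{w}$, so every MAP estimate is a model of $\Delta$ of maximal prior weight. Next, because $\succ$ is a strict total order and $\llbracket\Delta\rrbracket$ is non-empty, there is a unique $\succ$-maximal element $w^{*}$, and every other $w\in\llbracket\Delta\rrbracket$ satisfies $w^{*}\succ w$, which together with the order-preserving condition gives $\phi^{*}\geq\phi_{w}$; thus $w^{*}$ is itself a MAP estimate. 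Assuming $\Delta\vapprox_{MAP}\alpha$, some witness $w_{MAP}\in\argmax_{w}p(w|\Delta)$ belongs to $\llbracket\alpha\rrbracket$, and the task reduces to identifying this witness with $w^{*}$, so that $w^{*}\in\llbracket\alpha\rrbracket$ and hence $\Delta\vsim_{({\cal W},\succ)}\alpha$.

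The main obstacle is precisely this last identification: the order-preserving condition as stated is only weak ($\phi_{1}\geq\phi_{2}$), so without extra input several models of $\Delta$ could tie for the maximum of $\phi$ and the MAP set need not reduce to $\{w^{*}\}$. My plan for closing the gap is to invoke totality a second time: any rival MAP estimate $w\neq w^{*}$ with $\phi_{w}=\phi^{*}$ would be forced by totality to be $\succ$-comparable to $w^{*}$, and the $\succ$-maximality of $w^{*}$ together with the induced chain on the top layer promotes the weak order-preservation to a strict one at the top, forcing $w=w^{*}$. This collapses the existential witness in the definition of $\vapprox_{MAP}$ to the unique element $w^{*}$ and matches the universal quantifier in the definition of $\vsim_{({\cal W},\succ)}$ over the singleton set of $\succ$-maximal models, completing the converse.
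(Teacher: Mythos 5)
Your forward direction is fine, and your diagnosis of where the real work lies is also correct: the converse reduces to showing that the existential witness in the definition of $\vapprox_{MAP}$ can be taken to be the unique $\succ$-maximal model $w^{*}$ of $\Delta$. You even correctly identify the obstacle — under the stated order-preserving condition ($w_{1}\succ w_{2}$ implies $\phi_{1}\geq\phi_{2}$, a \emph{weak} inequality), the set $\argmax_{w}p(w|\Delta)$ need not be the singleton $\{w^{*}\}$. The problem is that your proposed repair does not work. Invoking totality ``a second time'' only tells you that any rival $w\in\llbracket\Delta\rrbracket$ with $\phi_{w}=\phi^{*}$ satisfies $w^{*}\succ w$; the order-preserving condition then returns exactly the inequality $\phi^{*}\geq\phi_{w}$ you already had. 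Nothing in the hypotheses ``promotes the weak order-preservation to a strict one at the top'': with ${\cal W}=\{w_{1},w_{2}\}$, $w_{1}\succ w_{2}$, $\phi_{1}=\phi_{2}=1/2$, $\llbracket\Delta\rrbracket=\{w_{1},w_{2}\}$ and $\alpha$ true only at $w_{2}$, one has $\Delta\vapprox_{MAP}\alpha$ (witness $w_{2}$) but $\Delta\not\vsim_{({\cal W},\succ)}\alpha$, since the unique $\succ$-maximal model is $w_{1}$. So the identification step you need is simply false under the assumptions as written; the converse requires the additional hypothesis that the map $w\mapsto\phi_{w}$ is \emph{strictly} order-preserving (equivalently, injective on a totally ordered ${\cal W}$), which does force $\argmax_{w\in\llbracket\Delta\rrbracket}\phi_{w}=\{w^{*}\}$.

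For what it is worth, the paper's own proof is just ``Same as Theorem~\ref{thrm:MAP1}; the only difference is that such a model $w^{*}$ exists uniquely,'' which silently makes the same unjustified leap from uniqueness of the $\succ$-maximal model to uniqueness of the MAP estimate. Your write-up is more honest in that it surfaces the gap explicitly, but it then closes it with an assertion rather than an argument. The clean fix is to state and use strict order-preservation; with that in hand your outline goes through verbatim.
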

\begin{proof}
Same as Theorem \ref{thrm:MAP1}. The only difference is that such model $w^{*}$ exists uniquely.
\end{proof}
\begin{theorem}\label{thrm:MAP4}
Let $({\cal W},\succ)$ be a totally ordered preferential structure and $\vapprox_{MAP}$ be a maximum a posteriori entailment with respect to $({\cal W},\succ)$. If there is no model of $\Delta$ then $\Delta\vapprox_{MAP}\alpha$ implies $\Delta\vsim_{({\cal W},\succ)}\alpha$, but not vice versa.
\end{theorem}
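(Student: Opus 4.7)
The plan is to observe that this theorem is essentially a restatement of Theorem~\ref{thrm:MAP2} in the more restrictive setting where $\succ$ is a total order, and that both halves carry over with virtually no change. The forward direction is vacuous: by the definition of preferential entailment, if $\llbracket\Delta\rrbracket=\emptyset$ then the set of $\succ$-maximal models of $\Delta$ is empty, so $\Delta\vsim_{({\cal W},\succ)}\alpha$ holds for every $\alpha\in L$. Hence the implication $\Delta\vapprox_{MAP}\alpha\Rightarrow\Delta\vsim_{({\cal W},\succ)}\alpha$ is trivially true and requires no calculation.

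For the failure of the converse, I would replay the counterexample from the proof of Theorem~\ref{thrm:MAP2}. Take $\Delta=\{\beta,\lnot\beta\}$ for some atomic $\beta$. The calculation
\begin{eqnarray*}
p(W\mid\beta,\lnot\beta)=\frac{\mu(1-\mu)p(W)}{\mu(1-\mu)\sum_{w}p(w)}=p(W)
\end{eqnarray*}
from Theorem~\ref{thrm:MAP2} does not use any property of the ordering, so it still holds here. Consequently $\argmax_{w}p(w\mid\Delta)=\argmax_{w}p(w)$, and the MAP estimate is simply the prior mode.

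Since the total order assumption only constrains the parameters by $w_{1}\succ w_{2}\Rightarrow\phi_{1}\geq\phi_{2}$, I am free to pick a strictly decreasing assignment $\phi_{1}>\phi_{2}>\cdots>\phi_{N}$ along the chain induced by $\succ$. Then $w_{1}$ is the unique MAP world. Choosing $\alpha$ to be a formula that is false in $w_{1}$ (e.g., the atomic formula whose truth value under $w_{1}$ is $0$, which certainly exists provided $L$ contains such a symbol), we obtain $w_{1}\notin\llbracket\alpha\rrbracket$ and hence $\Delta\not\vapprox_{MAP}\alpha$, while $\Delta\vsim_{({\cal W},\succ)}\alpha$ holds vacuously. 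This exhibits the required failure of the converse.

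There is no substantive obstacle: the only point that needs a moment's care is verifying that the strictly decreasing assignment of $\phi_{n}$ along the total order is admissible (it is, because the constraint is $\geq$, not $>$, and strictness is allowed), and that a formula false at $w_{1}$ exists (which holds as soon as the language is nontrivial at the atoms distinguishing the worlds in $\mathcal{W}$). The proof can therefore be written as a one-line reference to Theorem~\ref{thrm:MAP2} for the forward direction, together with the boxed counterexample above for the converse.
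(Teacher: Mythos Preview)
Your proposal is correct and essentially identical to the paper's own proof, which simply states ``Same as Theorem~\ref{thrm:MAP2}.'' The paper's counterexample for Theorem~\ref{thrm:MAP2} already used a strictly decreasing sequence $\phi_{n}>\phi_{n+1}$, which is precisely a total order, so the same construction carries over verbatim---exactly as you argue.
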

\begin{proof}
Same as Theorem \ref{thrm:MAP2}.
\end{proof}
\begin{example}
Suppose preferential structure $(\{w_{1}$, $w_{2}$, $w_{3}$, $w_{4}\}$, $\{(w_{1}$, $w_{2})$, $(w_{1}$, $w_{3})$, $(w_{1}$, $w_{4})$, $(w_{3}$, $w_{2})$, $(w_{4}$, $w_{2})\}$ depicted on the left hand side in Figure \ref{fig:tab}. On the right hand side, you can see the probability distribution over valuation functions that preserves the preference order.
\par
Now, $\{a\lor\lnot b\}\vsim_{({\cal W},\succ)}\lnot b$ holds because $\lnot b$ is true in $w_{1}$, which is the $\succ$-maximal model of $\{a\lor\lnot b\}$. Meanwhile, $\{a\lor\lnot b\}\vapprox_{MAP}\lnot b$ holds because $w_{1}\in \argmax_{w}p(w|a\lor\lnot b)$ and $w_{1}\in\llbracket\lnot b\rrbracket$.
\par
In contrast, $\{a\}\not\vsim_{({\cal W},\succ)}\lnot b$ holds because $\lnot b$ is false in $w_{4}$, which is a $\succ$-maximal model of $a$. However, $\{a\}\vapprox_{MAP}\lnot b$ holds because $w_{3}=\argmax_{w}p(w|a)$ and $w_{3}\in\llbracket\lnot b\rrbracket$.
\end{example}
\begin{figure}[t]
\begin{center}
 \includegraphics[scale=0.4]{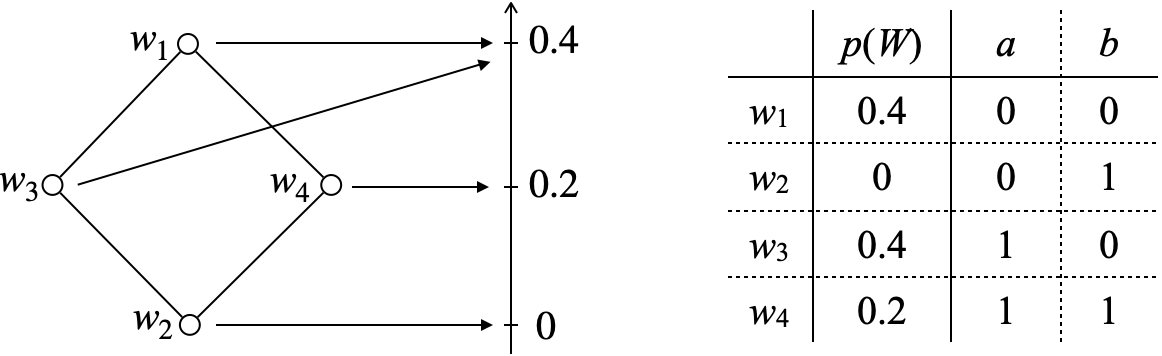}
  \caption{The left graph shows the map from the preferential structure to the prior distribution over valuation functions given in the right table.}
  \label{fig:tab}
\end{center}
\end{figure}

\subsection{Predictive Accuracy}
In this section, we specialise the logical model so that the Bayesian entailment can deal with classification tasks. Correctness of the specialisation is empirically discussed in terms of machine learning using the Titanic dataset available in Kaggle \cite{titanic}, which is an online community of machine learning practitioners. The dataset is used in a Kaggle competition aimed to predict what sorts of people were likely to survive in the Titanic disaster in 1912. Each of 891 data in the dataset contains nine attributes (i.e. ticket class, sex, age, the number of spouses aboard, the number of children aboard, ticket number, passenger fare, cabin number and port of embarkation) and one goal (i.e. survival). In contrast to Table \ref{ex.Int}, the attributes of the Titanic dataset are not generally Boolean variables. We thus treat each attribute with a certain value as a Boolean variable. For example, for the ticket class attribute (abbreviated to $TC$), we assume three Boolean variables $TC=1$, $TC=2$ and $TC=3$, meaning the 1st, 2nd and 3rd class, respectively. In this way, we replace each value of all categorical data with a distinct integer value for identification purpose.
\par
Mathematically, let $D$ be a set of tuples $(\Delta,\alpha)$ where $\Delta$ is a set of formulas and $\alpha$ is a formula. We call $D$ a dataset, $(\Delta,\alpha)$ data, $\Delta$ attributes, and $\alpha$ a goal. The dataset is randomly split into three disjoint sets: 60\% training set, 20\% cross validation set and 20\% test set, denoted by $D_{\text{train}}$, $D_{\text{cv}}$ and $D_{\text{test}}$, respectively. 
\par
We consider the logical model with parameter $(\phi_{1},\phi_{2},...,\phi_{N})$ given by a MLE (maximum likelihood estimate) using the training set and parameter $\mu$ given by a model selection using the cross validation set. Concretely, the MLE is calculated as follows.
%
\begin{eqnarray*}
(\hat{\phi}_{1},...,\hat{\phi}_{N})&\in&\argmax_{\phi_{1},...,\phi_{N}}p(D_{\text{train}}|\phi_{1},...,\phi_{N})\\
&=&\argmax_{\phi_{1},...,\phi_{N}}\prod_{(\Delta,\alpha)\in D_{\text{train}}}p(\alpha,\Delta|\phi_{1},...,\phi_{N})\\
&=&\argmax_{\phi_{1},...,\phi_{N}}\prod_{(\Delta,\alpha)\in D_{\text{train}}}p(\alpha|\phi_{1},...,\phi_{N})\prod_{\beta\in\Delta}p(\beta|\phi_{1},...,\phi_{N})
\end{eqnarray*}
%
In practice, we regard each data in the training set as a possible world, and directly use the training set as a uniform distribution of possibly duplicated possible worlds. This technique results in the same Bayesian predictive entailment although it reduces the cost of the MLE calculation.
\par
Given $(\hat{\phi}_{1},\hat{\phi}_{2},...,\hat{\phi}_{N})$, the model selection is calculated as follows.
\begin{eqnarray*}
\hat{\mu}=\argmax_{\mu}\sum_{(\Delta,\alpha)\in D_{\text{cv}}}\llbracket\Delta\vapprox_{0.5}\alpha\rrbracket,
\end{eqnarray*}
where $\llbracket\Delta\vapprox_{0.5}\alpha\rrbracket=1$ if $\Delta\vapprox_{0.5}\alpha$ holds and $\llbracket\Delta\vapprox_{0.5}\alpha\rrbracket=0$ otherwise. We call the Bayesian entailment defined on the logical model the Bayesian predictive entailment.
\par
We investigate learning performance of the Bayesian predictive entailment in terms of whether or to what extent $\Delta\vapprox_{\theta}\alpha$ holds, for all $(\Delta,\alpha)\in D_{\text{test}}$. Several representative classifiers are compared in Table \ref{tab:comp} in terms of accuracy, AUC (i.e. area under the ROC curve) and the runtime associated with one test datum prediction.
\par
The experimental results were calculated using a MacBook (Retina, 12-inch, 2017) with 1.4 GHz Dual-Core Intel Core i7 processor and 16GB 1867 MHz LPDDR3 memory. We assumed $\theta=0.5$ for the accuracy scores and $\theta\in[0,1]$ for the AUC scores. The best parameter $\mu$ of the Bayesian predictive entailment was selected from $\{0, 0.2, 0.4, 0.6, 0.8, 1\}$. The best number of trees in the forest of the random forest classifier was selected from $\{25, 50, 75, 100, 125, 150\}$. The best additive smoothing parameter of the categorical naive Bayes classifier was selected from $\{0, 0.2, 0.4, 0.6, 0.8, 1\}$. The best number of neighbours of the K-nearest neighbours classifier was selected from $\{5, 10, 15, 20, 25, 30\}$. The best regularisation parameter of the support vector machine classifier was selected from $\{0.001, 0.01, 0.1, 1, 10, 100\}$. All of the remaining parameters were set to be defaults given in scikit-learn 0.23.2.
\par
%
\begin{table}[t]
\caption{Learning performance averaged over one-hundred random splits of training, cross validation and test sets.}
\label{tab:comp}
\begin{center}
\begin{tabular}{c|ccc}
Classifier & Accuracy (std. dev.) & AUC (std. dev.) & Runtime (sec.) \\\hline
Bayesian entailment & 0.785 (0.034)  & \textbf{0.857} (0.032) & \textbf{0.004} \\
Random forest & \textbf{0.790} (0.032) & 0.844 (0.029) & 0.092 \\
Naive Bayes & 0.707 (0.037) & 0.826 (0.034) & 0.009 \\
K-nearest neighbours & 0.718 (0.034) & 0.676 (0.043) & 0.005 \\
Support vector machine & 0.696 (0.035) & 0.652 (0.041) & 0.085
\end{tabular}
\end{center}
\end{table}
%
\section{Discussion and Conclusions}\label{sec:discussion}
There are a number of attempts to combine logic and probability theory, e.g., \cite{adams:98,Fraassen:81b,Fraassen:83,Morgan:83a,Cross:93,Leblanc:79,Leblanc:83b,Pearl:91,Goosens:79,Richardson:06,matthias:13}. They are commonly interested in the notion of probability preservation, rather than truth preservation, where the uncertainty of the conclusion preserves the uncertainty of the premises. They all presuppose and extend the classical entailment. In contrast, this paper gives an alternative entailment without presupposing it.
\par
Besides the preferential entailment, various other semantics for non-monotonic consequence relations have been proposed such as plausibility structure \cite{friedman:96}, possibility structure \cite{dubois:90,Benferhat:03}, ranking structure \cite{goldszmidt:92} and $\varepsilon$-semantics \cite{adams:75,pearl:89}. The common idea of the first three approaches is that $\Delta$ entails $\alpha$ if $\llbracket\Delta\land\alpha\rrbracket\succeq\llbracket\Delta\land\lnot\alpha\rrbracket$ holds given preference relation $\succeq$. However, as discussed in \cite{brewka:97}, it is still unclear how to encode preferences among abnormalities or defaults. A benefit of our approach is that the preferences can be  encoded via Bayesian updating, where the distribution over possible worlds is dynamically updated within probabilistic inference in accordance with observations. Meanwhile, the idea of $\varepsilon$-semantics is that $\Delta$ entails $\alpha$ if $p(\alpha|\Delta)$ is close to one, given a probabilistic knowledge base quantifying the strength of the causal relation or dependency between sentences. They are fundamentally different from our work as we probabilistically model the interaction between models and sentences. The same holds true in the approaches \cite{adams:75,pearl:89,Hawthorne:07a,Hawthorne:07b}.
\par
Naive Bayes classifiers and Bayesian network classifiers work well under the assumption that all or some attributes in data are conditionally independent given another attribute. However, it is rare in practice that the assumption holds in real data. In contrast to the classifiers, our logical model does not need the conditional independence assumption. This is because the logical model always evaluates dependency between possible worlds and attributes, but not dependency among attributes.
\par
In this paper, we introduced a generative model of logical entailment. It formalised the process of how the truth value of a formula is probabilistically generated from the probability distribution over possible worlds. We discussed that it resulted in a simple inference principle that was correct in terms of classical logic, paraconsistent logic, nonmonotonic logic and machine learning. It allowed us to have a general answer to the questions such as how to logically infer from inconsistent knowledge, how to rationally handle defeasibility of everyday reasoning, and how to probabilistically infer from noisy data without a conditional dependence assumption.

\section*{Appendix}
\begin{proof}[Proposition 1]
We abbreviate $W=w$ to $w$ for simplicity. Since $\llbracket\alpha=0\rrbracket_{w}=1-\llbracket\alpha=1\rrbracket_{w}$, we have
\begin{eqnarray*}
p(\alpha=0|w)+p(\alpha=1|w)&=&\mu^{\llbracket\alpha=0\rrbracket_{w}}(1-\mu)^{1-\llbracket\alpha=0\rrbracket_{w}}+\mu^{\llbracket\alpha=1\rrbracket_{w}}(1-\mu)^{1-\llbracket\alpha=1\rrbracket_{w}}\\
&=&\mu^{1-\llbracket\alpha=1\rrbracket_{w}}(1-\mu)^{\llbracket\alpha=1\rrbracket_{w}}+\mu^{\llbracket\alpha=1\rrbracket_{w}}(1-\mu)^{1-\llbracket\alpha=1\rrbracket_{w}}.
\end{eqnarray*}
(1) holds because both $p(\alpha|w)$ and $p(w)$ cannot be negative. If $\llbracket\alpha=1\rrbracket_{w}=1$ then $p(\alpha=0|w)+p(\alpha=1|w)=(1-\mu)+\mu=1$. If $\llbracket\alpha=1\rrbracket_{w}=0$ then $p(\alpha=0|w)+p(\alpha=1|w)=\mu+(1-\mu)=1$. Now, (2) is shown as follows.
\begin{eqnarray*}
p(\alpha=0)+p(\alpha=1)&=&\sum_{w}p(\alpha=0|w)p(w)+\sum_{w}p(\alpha=1|w)p(w)\\
&=&\sum_{w}p(w)\{p(\alpha=0|w)+p(\alpha=1|w)\}\\
&=&\sum_{w}p(w)=1
\end{eqnarray*}
(3) is shown as follows. From (2), it is sufficient to show only case $i=1$ because case $i=0$ can be developed as follows.
\begin{eqnarray*}
1-p(\alpha\lor\beta=1)&=&1-\{p(\alpha=1)+p(\beta=1)-p(\alpha\land\beta=1)\}.
\end{eqnarray*}
Now, it is sufficient to show $p(\alpha\lor\beta=1|w)=p(\alpha=1|w)+p(\beta=1|w)-p(\alpha\land\beta=1|w)$ since case $i=1$ can be developed as follows.
\begin{eqnarray*}
\sum_{w}p(\alpha\lor\beta=1|w)p(w)=\sum_{w}\{p(\alpha=1|w)+p(\beta=1|w)-p(\alpha\land\beta=1|w)\}p(w)
\end{eqnarray*}
By case analysis, the right expression is shown to have
\begin{eqnarray}
(1-\mu)+(1-\mu)-(1-\mu)&=&1-\mu\label{1}\\
(1-\mu)+\mu-(1-\mu)&=&\mu \label{2}\\
\mu +(1-\mu)-(1-\mu)&=&\mu\label{3}\\
\mu+\mu-\mu&=&\mu \label{4}
\end{eqnarray}
where (\ref{1}), (\ref{2}), (\ref{3}) and (\ref{4}) are obtained in the cases ($\llbracket\alpha=1\rrbracket_{w}=\llbracket\beta=1\rrbracket_{w}=0$), ($\llbracket\alpha=1\rrbracket_{w}=0$ and $\llbracket\beta=1\rrbracket_{w}=1$), ($\llbracket\alpha=1\rrbracket_{w}=1$ and $w\in\llbracket\beta=1\rrbracket_{w}=0$), and ($\llbracket\alpha=1\rrbracket_{w}=\llbracket\beta=1\rrbracket_{w}=1$), respectively.  All of the results are consistent with the left expression, i.e., $p(\alpha\lor\beta=1|w)$.
\end{proof}
\begin{proof}[Proposition 2]
For all $w$, $p(\alpha=1|w)=\mu$ if and only if $p(\lnot\alpha=1|w)=1-\mu$, and $p(\alpha=1|w)=1-\mu$ if and only if $p(\lnot\alpha=1|w)=\mu$. Therefore, $p(\alpha=1|w)=1-p(\lnot\alpha=1|w)$. From (2) of Proposition \ref{kolmogorov}, we have 
\begin{eqnarray*}
p(\alpha=1)&=&\sum_{w}p(\alpha=1|w)p(w)\\
&=&\sum_{w}\{1-p(\lnot\alpha=1|w)\}p(w)\\
&=&\sum_{w}p(\lnot\alpha=0|w)p(w)=p(\lnot\alpha=0).
\end{eqnarray*}
\end{proof}

\bibliographystyle{elsarticle-num}
\bibliography{btxkido}
\end{sloppypar}
\end{document}